\newcommand*\mc[1]{\ensuremath{\mathcal{#1}}}  % mathcal
\newcommand*\ttt[1]{\texttt{#1}} % texttt
\newcommand*\bigO{\ensuremath{\mathcal{O}}}  % Complexity big-O
\newcommand*\R{\ensuremath{\mathbb{R}}}  % Reals
\newcommand*\SE{\ensuremath{\text{SE}}}  % Special Euclidean group
\newcommand*\argmin[1]{\underset{#1}{\mathrm{argmin}} \mathop{}}  % argmin
\newcommand*\BT{\ensuremath{\mc{T}_{\tt{BT}}}} % Bayes Tree
\newcommand*\mx[1]{\mathbf{#1}} % Matrix command
\newcommand*\vc[1]{\mathbf{#1}} % Vector command
\newtheorem{lemma}{Lemma}
\tikzstyle{pnode} = [draw,circle,minimum width=0.25cm,minimum height=0.25cm,fill=cyan]
\tikzstyle{lnode} = [draw,circle,minimum width=0.25cm,minimum height=0.25cm,fill=green]
\tikzstyle{wlabel} = [shift={(180:0.4)}]
\tikzstyle{slabel} = [shift={(270:0.4)}]
\tikzstyle{nlabel} = [shift={(90:0.4)}]
\title{\LARGE \bf
Complexity Analysis and Efficient Measurement Selection Primitives for High-Rate Graph SLAM
}
\author{Kristoffer M. Frey$^{1}$, Ted J. Steiner$^{2}$, and Jonathan P. How$^{1}$% <-this % stops a space
\thanks{Work was supported by the Defense Advanced Research Projects Agency (DARPA) as part of the Fast Lightweight Autonomy (FLA) program, HR0011-15-C-0110.
        The views expressed here are those of the authors, and do not reflect the official views or policies of the Department of Defense or the U.S. Government.}% <-this % stops a space
\thanks{$^{1}$Kristoffer M. Frey and Jonathan P. How are with the Department of Aeronautics and Astronautics, MIT,
        Cambridge, MA 02139, USA
        {\tt\small kfrey@mit.edu}
        {\tt\small jhow@mit.edu}}%
\thanks{$^{2}$Ted J. Steiner is a Senior Member of the Technical Staff at Draper,
        Cambridge, MA 02139, USA
        {\tt\small tsteiner@draper.com}}%
}
\begin{document}

\vspace{1pt}
\maketitle
\thispagestyle{empty}
\pagestyle{empty}

%%%%%%%%%%%%%%%%%%%%%%%%%%%%%%%%%%%%%%%%%%%%%%%%%%%%%%%%%%%%%%%%%%%%%%%%%%%%%%%%
\begin{abstract}

Sparsity has been widely recognized as crucial for efficient optimization in graph-based SLAM.
Because the sparsity and structure of the SLAM graph reflect the set of incorporated measurements, many methods for sparsification have been proposed in hopes of reducing computation.
These methods often focus narrowly on reducing edge count without regard for structure at a global level.
Such structurally-na\"{\i}ve techniques can fail to produce significant computational savings, even after aggressive pruning.
In contrast, simple heuristics such as measurement decimation and keyframing are known empirically to produce significant computation reductions.
To demonstrate why, we propose a quantitative metric called \emph{elimination complexity} (EC) that bridges the existing analytic gap between graph structure and computation.
EC quantifies the complexity of the primary computational bottleneck: the factorization step of a Gauss-Newton iteration.
Using this metric, we show rigorously that decimation and keyframing impose favorable global structures and therefore achieve computation reductions on the order of $r^2 / 9$ and $r^3$, respectively, where $r$ is the pruning rate.
We additionally present numerical results showing EC provides a good approximation of computation in both batch and incremental (iSAM2) optimization and demonstrate that pruning methods promoting globally-efficient structure outperform those that do not.
\end{abstract}

%%%%%%%%%%%%%%%%%%%%%%%%%%%%%%%%%%%%%%%%%%%%%%%%%%%%%%%%%%%%%%%%%%%%%%%%%%%%%%%%
\section{Introduction}

Graph-based approaches to the Simultaneous Localization and Mapping (SLAM) problem have gained popularity in recent years \cite{dellaert2006square,kaess2011isam2,sibley2010sliding,huang2013consistent,steiner2017vision,khosoussi2016maximizing}.
These problems are generally formulated as nonlinear least-squares (NLLS) optimizations over the set of robot poses and landmark positions.
Though in certain cases non-iterative solutions exist \cite{rosen2016se}, SLAM problems are usually solved iteratively using a form of Gauss-Newton (GN).
As recognized in \cite{dellaert2006square}, SLAM problems demonstrate a naturally sparsity that can be leveraged to significantly reduce computation with each GN iteration.
This sparsity manifests itself as a large number of zero entries in the graph adjacency matrix, or, equivalently, as a large number of ``missing'' edges relative to a complete graph.
Nevertheless, this paper will emphasize that computation is a function of the graph structure and not simply just edge count.

Because measurements correspond to edges in the SLAM graph, the choice of measurements included in the optimization thus directly influences sparsity and graph structure.
This has recently motivated numerous sophisticated measurement-selection and sparsification strategies for computation reduction~\cite{carlevaris2013long,huang2013consistent,mazuran2016nonlinear,khosoussi2016maximizing}.
However, these methods narrowly focus on reducing edge count (i.e., treat all edges as equivalent from a computation perspective) \cite{dissanayake2000computationally,huang2013consistent,khosoussi2016maximizing} or enforcing locally-sparse structure \cite{carlevaris2013long,mazuran2016nonlinear}.
Because they do not curate global structure, these methods may only achieve limited computational savings, even after aggressive pruning.

To better connect graph structure to computation, this paper proposes \emph{elimination complexity} (EC) as an approximation of the FLOP count associated with each GN iteration.
As a function purely of graph structure, EC is independent of the particular numeric values of the optimization at any time.
Experimental evidence confirms that EC trends linearly with computation time in both batch and incremental modes of operation.

We apply EC-based analysis to keyframing and decimation, two common measurement selection primitives in high-rate vision-based SLAM \cite{ila2010information,stalbaum2013keyframe,carlone2017attention,mur2017orb,steiner2017vision}.
These two heuristics are shown to produce \emph{globally} efficient graph structures that achieve significant computational savings.
In particular, it is demonstrated both analytically and numerically that decimation reduces EC at a rate of $\sim r^2 / 9$, while keyframing reduces EC at an increased rate of $r^3$.
Here, $r$ refers to ``rate'' of pruning in either case and is defined in more detail in the body of this paper.
These insights can guide application of these techniques in systems with significant computation constraints and high required optimization rates.

As an analytic tool, EC provides a key link between graph structure and computation that has thus far been lacking from the discussion of measurement selection in SLAM.
Furthermore, because it can be evaluated directly from the iSAM2 Bayes Tree \cite{kaess2011isam2}, EC can potentially be used in an online sense to adapt computation management policies as the SLAM graph evolves.

In practice, measurement selection ultimately aims to tradeoff accuracy for computational savings.
While quantitative connections between graph structure and various accuracy metrics have been made for linear SLAM graphs \cite{olson2009evaluating,khosoussi2014novel}, their extension to general SLAM measurement models is unclear.
Rather than attempting to address the accuracy side of this tradeoff, this paper aims to establish that achieving good computation reduction is not simply a matter of maximizing the number of measurements removed, but rather about imposing favorable global graph structure.
Thus, techniques which select measurements for removal based on their estimation ``value'' but without respect for global structure may fail to produce the desired level of computational savings.

%%%%%%%%%%%%%%%%%%%%%
\subsection{Related Work}

The most relevant discussion of computation in graph SLAM comes from the seminal paper by Dellaert et al.~\cite{dellaert2006square}, and the iSAM2 paper by Kaess et al.~\cite{kaess2011isam2}.
There, the authors connect GN optimization to sparse factorization of the corresponding system matrix.
They also make the important observation that the choice of elimination ordering is crucial for efficient optimization.
However, they do not explicitly connect the underlying Bayes Tree structure \cite{kaess2011isam2} to a metric of computational complexity.
While the complexity of sparse matrix factorization is well-understood in the linear algebra literature \cite{rose1972graph,heggernes1996finding,luce2014minimum}, such results have not been tailored for SLAM-specific solvers nor applied to measurement selection in SLAM.

Computation is often managed in SLAM by bounding the number of variables being estimated, usually via sliding-window approaches \cite{steiner2017vision,chiu2013robust} or by merging nearby pose variables \cite{ila2010information,johannsson2013temporally,carlevaris2013long}.
Nonetheless, as measurements (and corresponding edges) are added the graph grows denser, and further steps must often be taken to promote sparsity.

Many methods of measurement selection exist in the literature with the specific goal of promoting sparsity, though none are guaranteed to produce computationally-efficient global graph structures.
The Sparse Extended Information Filter \cite{dissanayake2000computationally} or the $\mc{L}_1$ regularization of \cite{huang2013consistent} directly sparsify the information matrix, although without regard for resulting structure.
Alternatively, \cite{carlevaris2013long,mazuran2016nonlinear} make use of pre-selected sparse topologies to replace dense regions of the graph, and structural considerations are limited to local regions of the graph.
Heuristics for applying these sparsification optimizations in online pose-graph SLAM were suggested in \cite{carlevaris2013long}, but applying such techniques to the more general graphs common in landmark-SLAM is less straightforward.
Following a general graph-theoretic approach, \cite{khosoussi2016maximizing} provides max-spanning-tree-algorithms which select a fixed number of edges to retain or prune.
Besides the fact that none of these approaches explicitly consider global structure from a computational standard, none achieve both practicality for computationally-constrained systems and direct applicability to general landmark-SLAM graphs.
In contrast, keyframing and decimation policies are generally inexpensive and have been used extensively in real-time SLAM \cite{ila2010information,stalbaum2013keyframe,wang2013kullback,carlone2017attention}.

%%%%%%%%%%%%%%%%%%%%%%%%%%%%%%%%%%%%%%%%%%%%%%%%%%%%%%%%%%%%%%%%%%%%%%%%%%%%%%%%
\section{Estimating Computation}

The graph SLAM optimization problem with Gaussian factors can be expressed as a NLLS minimization \cite{dellaert2006square}
\begin{equation}\label{eq:nlls}
  \argmin{X} \frac{1}{2}\sum_{i = 0}^m (\vc{h}_i(X_i) - \vc{z}_i)^T \mx{\Sigma}_i^{-1} (\vc{h}_i(X_i) - \vc{z}_i).
\end{equation}
Using iterative GN \cite{bertsekas1999nonlinear}, optimization of \eqref{eq:nlls} requires multiple solves of the (assumed positive-definite) \emph{linearized} system
\begin{equation}\label{eq:normal_eqns}
  \mx{A}^T \mx{A} \vc{x} = \mx{A}^T \vc{b},
\end{equation}
The graph representations of the original \eqref{eq:nlls} and linearized systems \eqref{eq:normal_eqns} are identical, meaning sparsity is preserved through linearization.

The number of GN iterations required for convergence depends fundamentally on the initial values, measurement values (which are assumed random), and environmental factors such as availability of opportunistic landmarks \cite{bertsekas1999nonlinear}.
Thus it is difficult in general to predict how many iterations will be needed, or even if convergence will occur.

In contrast, linear systems of the form \eqref{eq:normal_eqns} are well-characterized, and computation is a function of the corresponding graph structure rather than numeric values.
As the total GN computation is essentially the sum of these linear solves, computational savings at the linear level corresponds to multi-fold savings in total.
Thus, quantifying the computation involved in \eqref{eq:normal_eqns} provides a link from structure to total computation.

The two fundamental steps involved in solving the linear system \eqref{eq:normal_eqns} are \emph{elimination} and \emph{back-substitution}.
Elimination is equivalent to factorization of the system $\mx{A}^T \mx{A} = \mx{R}^T \mx{R}$ into the upper-triangular square matrix $\mx{R}$ \cite{dellaert2006square}.
Back-substitution refers to solving the remaining triangular system, and has complexity linear in the number of non-zero elements in $\mx{R}$.
Because elimination of the $n \times n$ system $\mx{A}^T \mx{A}$ carries a worst-case $\bigO(n^3)$ complexity, it often represents the majority of computation in practice, even for incremental algorithms \cite{kaess2011isam2}.
This motivates the use of \emph{elimination complexity} as a representative measure of the inherent complexity represented by a graph.

%%%%%%%%%%%%%%%%%%%%
\subsection{Elimination Complexity}\label{s:elimination}

\begin{figure*}[ht]
  \centering
  \includegraphics[width=0.8\textwidth]{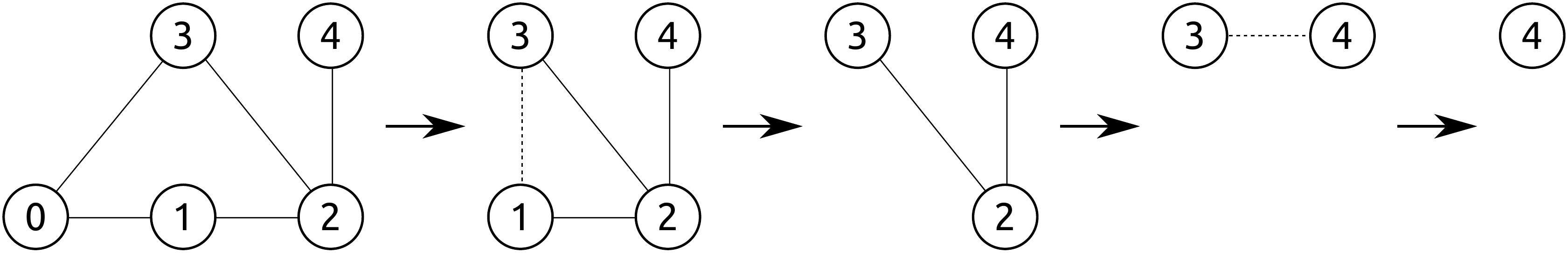}
  \caption[Node elimination on a simple graph]{
    The node elimination algorithm executed on a simple graph $G^{(0)}$.
    Nodes are eliminated in the order (0, 1, 2, 3, 4), producing a series of elimination graphs $G^{(i)}$.
    Induced edges are shown with dotted lines.
  }
  \label{fig:node_elim}
  \vspace*{-.25cm}
\end{figure*}

As noted by \cite{dellaert2006square,kaess2010bayes}, \eqref{eq:normal_eqns} can be solved efficiently by sparse factorization of $\mx{A}^T \mx{A}$.
Sparse factorization follows the pattern of \emph{node elimination} on the graph, illustrated in Figure~\ref{fig:node_elim}.
In node elimination, variable nodes are eliminated one-by-one, corresponding to marginalization of the variables from the joint distribution over all remaining variables.
When the $i$-th node is eliminated, it is removed from the elimination graph and edges are induced such that all its remaining neighbors form a fully-connected clique in elimination graph $G^{(i+1)}$.
These new edges did not exist in the original graph and thus constitute \emph{fill}, representing intermediate dependencies between variables induced by a particular elimination ordering.
In the final upper-triangular $\mx{R}$ factor, these fill edges correspond to nonzero ``filled-in'' entries that were zero in the original system matrix $\mx{A}^T \mx{A}$.

As is well-known from the sparse linear algebra literature \cite{dellaert2006square,davis2004algorithm,yannakakis1981computing}, fill depends on the chosen variable ordering $\mc{P}$.
Though the solution itself is unaffected by ordering, different orderings can result in widely differing fill at each step of the optimization.
However, determining the optimal (min-fill or min-FLOP) ordering is NP-complete \cite{yannakakis1981computing,luce2014minimum}.
In practice, efficient heuristics such as Column-Approximate Min-Degree (\texttt{COLAMD}) \cite{davis2004algorithm} are widely used \cite{dellaert2006square,kaess2011isam2}.

For certain graph structures, na\"{\i}ve elimination ordering can produce catastrophic fill-in, even if the original graph was quite sparse.
For example, Duff \cite{duff1974number} showed that in random matrices with initially very few non-zero entries, as elimination proceeds the probability of the remaining elements becoming non-zero (due to fill-in) rapidly approaches one.

Each step of node elimination corresponds to computing one step of the corresponding sparse QR or Cholesky factorization \cite{rose1972graph,heggernes1996finding,dellaert2006square}, and involves computation scaling with the dimension of the neighbors of the eliminated node. From this perspective, solving the full system \eqref{eq:normal_eqns} is decomposed into solving a series of smaller, dense sub-problems. The complexity of factorizing the full sparse system is then simply the sum of the complexities of the individual dense sub-problems.

Rose \cite{rose1972graph} showed that computing the $\mx{R}^T \mx{R}$ decomposition of a sparse $n \times n$ matrix can be performed in
\begin{equation}\label{eq:rose_cholesky_mults}
  \frac{1}{2} \sum_{i=1}^{n-1} d(i, \mc{P}) (d(i, \mc{P}) + 3) \sim \sum_{i=1}^{n-1} d(i, \mc{P})^2
\end{equation}
multiplications, where $d(i, \mc{P})$ refers to the degree of the $i$-th eliminated node in the elimination graph $G^{(i)}$ produced by ordering $\mc{P}$.
The asymptotic form of \eqref{eq:rose_cholesky_mults} is equivalent to the approximate Cholesky FLOP count  \cite{luce2014minimum}.
Note that, for a fully dense matrix (corresponding to a fully-connected graph), $d(i) \sim n$ (full system dimension) and sparse factorization approaches the full $n^3$ complexity for dense matrices.

In a conventional linear algebra approach, factorization of the system \eqref{eq:normal_eqns} occurs one row (or column) at a time.
The corresponding graph $G^{(0)}$ includes $n$ nodes, matching the scalar dimension of the system.
However, \eqref{eq:normal_eqns} has additional block structure for typical SLAM systems~\cite{dellaert2006square}.
In SLAM, the variables of interest are often multi-dimensional quantities such as positions and rotations, and measurements generally are defined on the level of these ``macro-variables.''
In this case, it is the \emph{block} sparsity pattern of \eqref{eq:normal_eqns} that is represented in the factor graph.

By applying ordering heuristics such as \texttt{COLAMD} \cite{davis2004algorithm} on the block structure directly, \cite{dellaert2006square} showed improved performance and less fill.
Following this observation, modern SLAM solvers such iSAM2 \cite{kaess2011isam2} apply elimination directly on the ``macro-variables'' of the factor graph.
This motivates the definition of a version of \eqref{eq:rose_cholesky_mults} that accounts for the block structure of SLAM.

We define the \emph{elimination complexity} (EC) $\mc{C}(G, \mc{P})$ of a factor graph $G$ with variables $\mc{X}$ and ordering $\mc{P}$ as
\begin{equation}\label{eq:elim_complexity}
    \mc{C}(G, \mc{P}) \triangleq \sum_{i = 1}^{\lvert \mc{X} \rvert} d_f(i) \Big( d_f(i) + d_s(i, G, \mc{P}) \Big)^2
\end{equation}
where $d_f(i, \mc{P})$ and $d_s(i, G, \mc{P})$ are the total scalar dimension of the $i$-th frontal variable $\vc{x}_f$ and its corresponding separator set $\vc{x}_s$ according to $G$ and $\mc{P}$, respectively.
Note that due to induced fill and the removal of previously-eliminated nodes, the dimension $d_s$ of the $i$-th-eliminated variable will generally not match the neighborhood of that variable in the original graph $G$.
Thus, computation of $\mc{C}$ in general requires simulation of the elimination process using ordering $\mc{P}$.
Under scalar elimination, which corresponds to frontal variables of singular dimension $d_f(i) = 1$, the EC \eqref{eq:elim_complexity} reduces to the asymptotic form of \eqref{eq:rose_cholesky_mults}.

\begin{lemma}\label{lemma:C_non_decreasing}
  For a fixed elimination ordering $\mc{P}$ and graph $G$, let $G^+$ be constructed by adding an edge to $G$.
  Then, $\mc{C}(G, \mc{P}) \leq \mc{C}(G^+, \mc{P})$.
\end{lemma}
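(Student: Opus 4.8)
The plan is to reduce the inequality to a monotonicity property of the separator dimensions $d_s(i)$ across the elimination process, exploiting the fact that the frontal dimensions $d_f(i)$ depend only on the variables themselves and not on the edge set. Observe that in the definition \eqref{eq:elim_complexity}, for each fixed $i$ the summand $d_f(i)\big(d_f(i) + d_s(i)\big)^2$ is non-decreasing in $d_s(i)$ whenever $d_f(i), d_s(i) \ge 0$, which always holds since these are scalar dimensions. Hence it suffices to show that adding a single edge to $G$ can only (weakly) increase every separator dimension, i.e., $d_s(i, G, \mc{P}) \le d_s(i, G^+, \mc{P})$ for all $i$, and then compare the two sums termwise.

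To establish this, I would prove the stronger structural claim that the entire elimination process of $G^+$ dominates that of $G$. Writing $G^{(i)}$ and $G^{+(i)}$ for the respective elimination graphs on the remaining (un-eliminated) nodes after the first $i-1$ eliminations under $\mc{P}$, I claim by induction on $i$ that $G^{(i)}$ is a subgraph of $G^{+(i)}$. The base case is immediate, since $G^{(1)} = G \subseteq G^+ = G^{+(1)}$ by construction of $G^+$.

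For the inductive step, assume $G^{(i)} \subseteq G^{+(i)}$. Eliminating the $i$-th node removes it and induces a clique among its remaining neighbors. Because the neighborhood of node $i$ in $G^{(i)}$ is contained in its neighborhood in $G^{+(i)}$ by the inductive hypothesis, every fill edge induced in $G^{(i+1)}$ joins a pair of nodes that are also common neighbors of node $i$ in $G^{+(i)}$, so the same edge is induced (or already present) in $G^{+(i+1)}$. Combined with the fact that all surviving edges of $G^{(i)}$ persist in $G^{+(i)}$ by hypothesis, this yields $G^{(i+1)} \subseteq G^{+(i+1)}$, closing the induction. As a consequence, the separator set of node $i$ — its neighborhood in $G^{(i)}$ — is contained in its separator set in $G^{+(i)}$, and since separator dimension is the sum of the non-negative dimensions over that neighborhood, we obtain $d_s(i, G, \mc{P}) \le d_s(i, G^+, \mc{P})$. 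The termwise comparison then completes the proof.

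The main obstacle I anticipate is the inductive bookkeeping of the fill edges: I must argue carefully that the clique induced by eliminating node $i$ in the sparser graph \emph{embeds} into the clique induced in the denser graph, rather than merely comparing edge counts or degrees. This is the classical fill-monotonicity phenomenon from sparse factorization, but it requires tracking that the \emph{neighborhoods} (not just their sizes) remain nested at every elimination step, so that no fill edge can appear in $G^{(i+1)}$ without a counterpart in $G^{+(i+1)}$.
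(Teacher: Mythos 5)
Your proposal is correct and follows essentially the same route as the paper: both reduce the claim to the termwise monotonicity $d_s(i, G, \mc{P}) \le d_s(i, G^+, \mc{P})$ and substitute into \eqref{eq:elim_complexity}. The only difference is that the paper simply asserts this separator monotonicity, whereas you supply the standard induction showing $G^{(i)} \subseteq G^{+(i)}$ at every elimination step, which is a sound and welcome filling-in of the omitted detail.
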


\begin{proof}
	Let $G^+$ refer to the graph constructed by adding an edge to $G$.
	Following the elimination process described in Section~\ref{s:elimination}, the elimination neighborhood at each step $i$ cannot be smaller for $G^+$ than for $G$
	\begin{equation}
		d_s(i, G, \mc{P}) \leq d_s(i, G^+, \mc{P}) \qquad \forall i
	\end{equation}
	Substituting this into the definitions of $\mc{C}(G, \mc{P})$ and $\mc{C}(G^+, \mc{P})$ yields
%	\begin{equation}
$		\mc{C}(G, \mc{P}) \leq \mc{C}(G^+, \mc{P})$.
%	\end{equation}
\end{proof}
Lemma~\ref{lemma:C_non_decreasing} confirms the intuition that, for a fixed ordering, adding an edge to the graph cannot decrease elimination complexity.
Equivalently, removing an edge cannot \emph{increase} complexity, which confirms the conventional intuition applied by many existing measurement pruning techniques.
Importantly, however, it will be demonstrated experimentally that arbitrary edge pruning (with no regard for global structure) can be surprisingly \emph{ineffective} at reducing complexity, and thus EC-naive pruning techniques may only achieve limited computation reduction even after aggressive pruning.

%%%%%%%%%%%%%%%%%%%%%%%%%%%%%%
\subsection{Relationship to the iSAM2 Bayes Tree}\label{s:bayes_tree}

Characterizing the update-time computation of incremental solvers such as iSAM2 \cite{kaess2011isam2} is generally difficult.
An iSAM2 incremental update attempts to avoid re-elimination of the entire graph by representing the current solution and elimination process in a Bayes Tree structure.
Because it performs re-linearization and re-ordering only as needed, the computation required for each update depends on the the full update history and numerics of the problem.
Nonetheless, because the Bayes Tree fundamentally represents the elimination process of the system, elimination complexity still serves as a useful predictor of computation.

EC can be computed directly from the Bayes Tree, although some subtleties must be acknowledged.
Though generally guided by a \ttt{COLAMD} ordering, the implicit variable ordering represented in the Bayes Tree is semi-static and may not necessarily match $\mc{P}_{\ttt{COLAMD}}(G)$ at any time.
Furthermore, cliques in the Bayes Tree represent a \emph{multifrontal} factorization~\cite{dellaert2005multifrontal}, in which multiple variables may in certain cases be eliminated together rather than sequentially, taking advantage of optimized dense matrix operations.

For a given iSAM2 instance represented by the Bayes Tree $\BT$, elimination complexity can be computed
\begin{equation}\label{eq:elim_complexity_bayes_tree}
    \mc{C}_{\ttt{BT}}(\BT) \triangleq \sum_{C \in \BT} d_f(C) \Big( d_f(C) + d_s(C) \Big)^2
\end{equation}
where $d_f(C)$ and $d_s(C)$ are the total scalar dimension of the frontal and separator variables in clique $C$, respectively.

\begin{figure}[t]
	\centering
	\includegraphics[width=\columnwidth,trim={4cm, 8.5cm, 4.5cm, 8cm},clip]{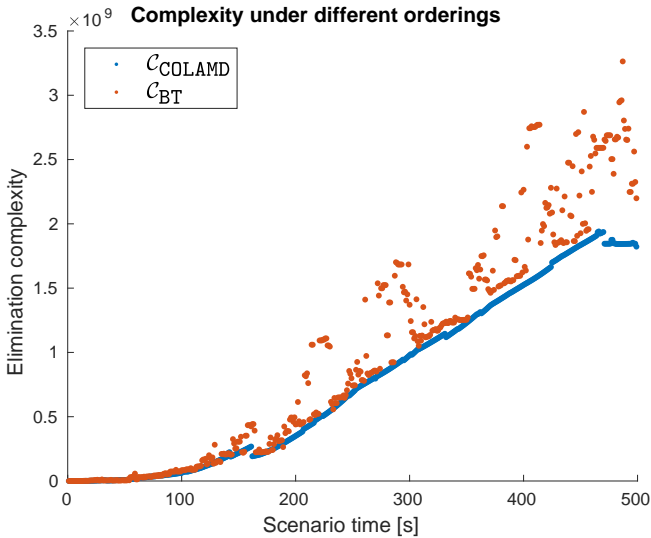}
	\caption{
		Elimination complexity (EC) grows over time during a simulated SLAM experiment as more variables and measurements are added.
		$\mc{C}_{\ttt{COLAMD}}$ is computed from the graph using a $\ttt{COLAMD}$ variable ordering, and $\mc{C}_{\ttt{BT}}$ is computed directly from the Bayes Tree.
		iSAM2 uses a constrained form of \ttt{COLAMD} and only updates the ordering when nodes are re-eliminated.
		For this reason $\mc{C}_{\ttt{BT}}$ is often greater than $\mc{C}_{\ttt{COLAMD}}$.
	}
	\label{fig:elim_complexity_results_1}
	\vspace{-0.25cm}
\end{figure}

\begin{figure}[t]
	\centering
	\includegraphics[width=\columnwidth,trim={4cm, 8.5cm, 4.5cm, 8cm},clip]{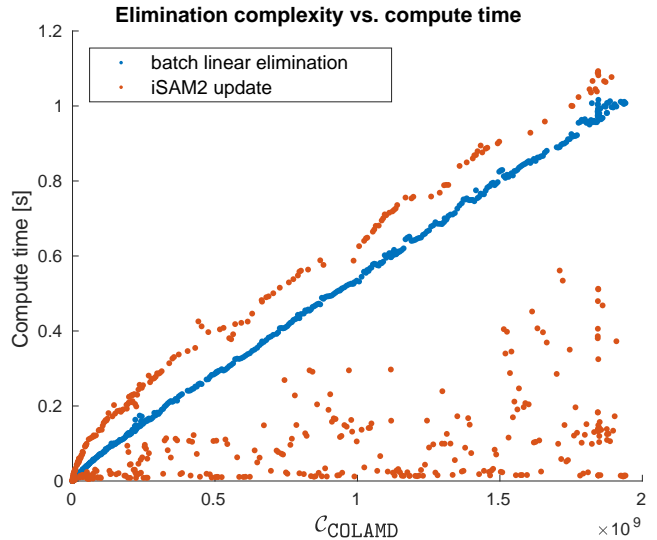}
	\caption{
		Our elimination complexity (EC) metric, $\mc{C}_{\ttt{COLAMD}}$, is directly proportional to the batch elimination time of the linearized system (blue).
		Here iSAM2 (red) often performed the full nonlinear update without fully re-eliminating the graph, resulting in a greatly reduced compute time.
		However, the in-practice worst-case computation involves re-eliminating the entire graph and tracks nearly linearly with EC.
	}
	\label{fig:elim_complexity_results_2}
	\vspace{-0.25cm}
\end{figure}

%%%%%%%%%%%%%%%%%%%%%%%%%%%%%
\subsection{Numerical Results}

As a measure of graph computational complexity, EC $\mc{C}(G, \mc{P})$ should correlate linearly with the actual computation time.
This was verified in an incremental SLAM simulation using iSAM2 and with the elimination complexity evaluated using the $\ttt{COLAMD}$ heuristic.
Figure \ref{fig:elim_complexity_results_1} shows that both $\mc{C}_{\ttt{COLAMD}}$ and $\mc{C}_{\ttt{BT}}$ grow over time as more poses, landmarks, and measurements are added to the SLAM system.
iSAM2 employs a lazy re-ordering scheme that attempts to maintain a near-$\ttt{COLAMD}$ ordering, which often makes $\mc{C}_{\ttt{BT}}$ larger than $\mc{C}_{\ttt{COLAMD}}$.

Figure~\ref{fig:elim_complexity_results_2} shows that the batch elimination time of the linearized system at each step is proportional to $\mc{C}_{\ttt{COLAMD}}$.
Furthermore, the incremental update time of iSAM2 displays an ``in-practice'' worst-case computation which also follows $\mc{C}_{\ttt{COLAMD}}$ linearly.
Though iSAM2 often avoids re-eliminating much of the graph in order to produce relatively low-cost updates, in this experiment it still re-eliminated much or all of the graph the majority of the time.
These results demonstrate that elimination complexity provides an approximation of computation even for incremental solvers.
This motivates the use of EC as an analytic link between graph structure and computation, allowing for quantitative evaluation of measurement selection strategies.

%%%%%%%%%%%%%%%%%%%%%%%%%%%%%%%%%%%%%%%%%%%%%%%%%%%%%%%%%%%%%%%%%%%%%%%%%%%%%%%%
\section{Measurement Selection Primitives}\label{s:primitives}

Armed with a measure of complexity relating graph structure to computation, we can now assess measurement selection strategies analytically for their affects on computation.
We focus on the two most ubiquitous selection policies in use in high-rate SLAM: decimation and keyframing.
First, we will define a general case of the landmark-SLAM graph and approximate the worst-case elimination complexity.
Then we will examine both keyframing and decimation, providing quantitative estimates of computation reduction.

%%%%%%%%%%%%%%%%%%%%
\subsection{The Landmark-SLAM Graph}

A typical landmark-SLAM problem is shown in Figure \ref{fig:lm_SLAM}.
Landmarks are represented as green nodes $l_j$, and robot poses (sampled along the trajectory) as blue nodes $x_i$.
As is typical of mobile robots, odometry measurements relate sequential poses, and could arise from inertial sensors or wheel encoders.
Landmark observations connect the robot's pose at a particular moment in time to a particular landmark, and could represent measurements taken from LIDAR, ranging data, or a camera image.

Due to real-world sensor limitations, most landmarks will only be observable from a small subset of poses.
However, the following analysis will assume the worst-case (from a sparsity perspective) -- that every landmark is observed from every pose, as illustrated in Figure \ref{fig:lm_SLAM}.
As is standard in SLAM approaches \cite{mourikis2007multi,steiner2017vision,martinelli2012vision}, it is assumed that landmark positions are uncorrelated \emph{a priori}, and thus no landmark-landmark edges exist in the graph.
Furthermore, to simplify the following discussion, direct pose-pose loop closures are not explicitly taken into account here.
Nonetheless, generalizing these results to allow for this form of loop closure is straightforward.

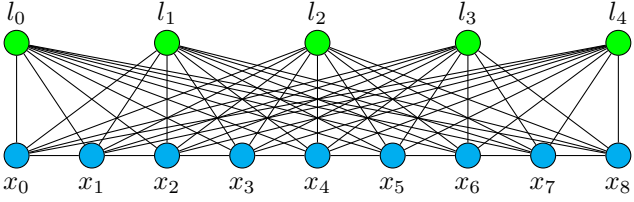
\begin{figure}[t]
  \begin{tikzpicture}
    % poses
    \foreach \i in {0,1,...,8} {
      \node [pnode, anchor=east] (x-\i) at (\i,0) {};
      \node [slabel] at (x-\i) {$x_{\i}$};
    }
    % landmarks
    \foreach \j in {0,1,...,4} {
      \node [lnode, anchor=east] (l-\j) at (\j*2,1.5) {};
      \node [nlabel] at (l-\j) {$l_{\j}$};
    }
    % draw odom edges
    \foreach \i[count=\inext from 1] in {0,1,...,7} {
      \draw (x-\i) -- (x-\inext) {};
    }
    % draw obs edges
    \foreach \i in {0,1,...,8} {
      \foreach \j in {0,1,...,4} {
        \draw (x-\i) -- (l-\j);
      }
    }
  \end{tikzpicture}
  \caption{
    A small landmark-SLAM example, in which every landmark $l_j$ is observed from every pose $x_i$.
    Odometry constraints (e.g., from inertial sensors or wheel encoders) connect consecutive poses.
    Assuming no direct pose-pose loop closures or direct landmark-landmark correlations are allowed (as is typical of visual-inertial odometry systems), this is the \emph{worst-case} graph from a sparsity perspective.
  }
  \label{fig:lm_SLAM}
  \vspace{-0.25cm}
\end{figure}

Let there be $n_l$ landmarks and $n_x$ poses in landmark-SLAM graph $G$.
The elimination complexity of this form of landmark-SLAM can be estimated by simulating the elimination process using a heuristic landmark-then-pose ordering $\bar{\mc{P}}$.
In practice, the \emph{de facto} \ttt{COLAMD} ordering is based on a min-degree heuristic, and will depend on the realized graph structure, which in turn depends on sensor limitations, the distribution of landmarks in the environment, and the robot's trajectory.
However, by analyzing the worst-case graph in which all landmarks are observed by all poses, and assuming that $n_l > n_x$, then $\bar{\mc{P}} = \mc{P}_{\ttt{min\ degree}} \approx \mc{P}_{\ttt{COLAMD}}$.
Thus, it follows that
\begin{equation}\label{eq:complexity_G}
  \mc{C}(G, \bar{\mc{P}}) \sim (d_l n_l + d_x n_x) (d_x n_x)^2
\end{equation}
where $d_x$ and $d_l$ are the scalar dimensions of the pose and landmark variables, respectively.

To derive \eqref{eq:complexity_G}, note that eliminating each of the $n_l$ landmarks incurs a complexity of $d_l (d_l + n_x d_x)^2$, as each landmark is adjacent to all $n_x$ poses.
Because the landmark nodes are non-adjacent in $G$, fill is only induced between pose nodes, leaving a fully-connected clique of the $n_x$ pose nodes.
This fully-connected clique then can be eliminated in $\bigO( d_x^3 n_x^3 )$ operations, producing the total asymptotic operation count shown in \eqref{eq:complexity_G}.

Also note that the inherent sparsity of SLAM has already produced a significant savings here compared to the $\bigO( (d_x n_x + d_l n_l)^3 )$ bound which would be achieved for a fully-dense system.
Nonetheless, computation is still cubic with the number of poses $n_x$, and thus reducing the number of pose nodes promises to reduce computation significantly.

%%%%%%%%%%%%%%%%%%%
\subsection{Keyframing}

Keyframing approaches aim to do exactly that.
By selecting only a subset of measurement frames to incorporate in the SLAM system, the number of included poses is reduced, ultimately leading to a cubic reduction in EC.
Here, a \emph{frame} can refer to an image frame in a video stream, or more generally any set of measurements produced at the same time and corresponding to a single robot pose.

Letting $G_k(r)$ refer to the graph produced from $G$ by keeping only $n_x / r$ pose nodes, it is easy to see that
\begin{equation}\label{eq:complexity_Gk}
  \mc{C}(G_k(r), \bar{\mc{P}}) \sim \Big(d_l n_l + d_x \frac{n_x}{r} \Big) \Big(d_x \frac{n_x}{r} \Big)^2
\end{equation}

Compared to \eqref{eq:complexity_G}, \eqref{eq:complexity_Gk} shows an asymptotic complexity reduction of between $r^2$ and $r^3$.
It will be demonstrated numerically that this reduction in practice can be much closer to $r^3$.
As noted by Ila et al.\ \cite{ila2010information}, reducing the number of redundant poses in the SLAM system has the added benefit of improving estimator consistency, because in GN approaches each pose corresponds to a first-order noise propagation.
All in all, this makes keyframing an ideal computation-reduction strategy if intermediate pose estimates are not of direct interest.

%%%%%%%%%%%%%%%%%%
\subsection{Decimation}\label{s:decimation}

In contrast to keyframing, decimation can be applied \emph{per measurement}, and is particularly useful if the set of poses in the graph is fixed -- i.e., if estimation of the pose at each frame is of direct interest.
Algorithm \ref{alg:ndec} defines the decimation rule used here.
By taking every $r$-th observation of each landmark in a \emph{non-aligned} fashion, decimated graphs demonstrate a pattern of offset partitioning as shown in Figure \ref{fig:ndec}.
Here $r$ is the \emph{decimation rate} and $k_j$ refers to the offset associated with a particular landmark $j$.
Usually, the decimation offset $k_j$ is determined by the pose index corresponding to the first available observation of landmark $j$.
For a landmark observed by a set of consecutive frames, this ensures that the maximum number of observations are accepted under the decimation constraint, and allows for landmark measurements to be included in the optimization as early as possible.

\begin{algorithm}[t]
  \caption{Decimation}
  \label{alg:ndec}
  \begin{algorithmic}[1]
    \Procedure{TestDecimate(\textit{Obs}, $r$)}{}
    \State $i \gets \textit{Obs.PoseIdx}$
    \State $k \gets \textit{Offset[Obs.LandmarkIdx]}$
    \If {$i \mod r = k$}
      \State \textit{Keep Obs}
    \Else
      \State \textit{Discard Obs}
    \EndIf
    \EndProcedure
  \end{algorithmic}
\end{algorithm}

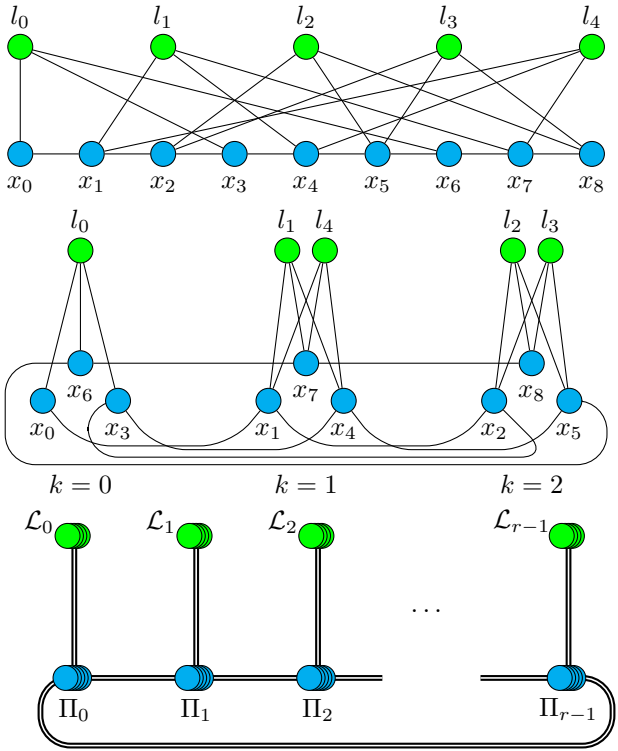
\begin{figure}[t]
  \begin{subfigure}[b]{\columnwidth}
\centerline{    % Dec
\begin{tikzpicture}[scale=.95]
  % poses
  \foreach \i in {0,1,...,8} {
    \node [pnode, anchor=east] (x-\i) at (\i,0) {};
    \node [slabel] at (x-\i) {$x_{\i}$};
  }
  % landmarks
  \foreach \j in {0,1,...,4} {
    \node [lnode, anchor=east] (l-\j) at (\j*2,1.5) {};
    \node [nlabel] at (l-\j) {$l_{\j}$};
  }
  % draw odom edges
  \foreach \i[count=\inext from 1] in {0,1,...,7} {
    \draw (x-\i) -- (x-\inext) {};
  }
  % draw obs edges
  \foreach \i in {0,3,...,6} { \draw (x-\i) -- (l-0); }
  \foreach \i in {1,4,...,7} { \draw (x-\i) -- (l-1); }
  \foreach \i in {2,5,...,8} { \draw (x-\i) -- (l-2); }
  \foreach \i in {2,5,...,8} { \draw (x-\i) -- (l-3); }
  \foreach \i in {1,4,...,7} { \draw (x-\i) -- (l-4); }
\end{tikzpicture}}
   \end{subfigure}
  \begin{subfigure}[b]{\columnwidth}
\centerline{    \begin{tikzpicture}
  % k = 0
  \node [pnode] (x-0) at (0.00,0.0) {};
  \node [slabel] at (x-0) {$x_0$};
  \node [pnode] (x-3) at (1.0,0.0) {};
  \node [slabel] at (x-3) {$x_3$};
  \node [pnode] (x-6) at (0.5,0.5) {};
  \node [slabel] at (x-6) {$x_6$};
  \node [lnode] (l-0) at (0.5,2.0) {};
  \node [nlabel] at (l-0) {$l_0$};
  \node [] at (0.5, -1.1) {$k = 0$};

  % k = 1
  \node [pnode] (x-1) at (3.0,0.0) {};
  \node [slabel] at (x-1) {$x_1$};
  \node [pnode] (x-4) at (4.0,0.0) {};
  \node [slabel] at (x-4) {$x_4$};
  \node [pnode] (x-7) at (3.5,0.5) {};
  \node [slabel] at (x-7) {$x_7$};
  \node [lnode] (l-1) at (3.25,2.0) {};
  \node [nlabel] at (l-1) {$l_1$};
  \node [lnode] (l-4) at (3.75,2.0) {};
  \node [nlabel] at (l-4) {$l_4$};
  \node [] at (3.5, -1.1) {$k = 1$};

  % k = 2
  \node [pnode] (x-2) at (6.0,0.0) {};
  \node [slabel] at (x-2) {$x_2$};
  \node [pnode] (x-5) at (7.0,0.0) {};
  \node [slabel] at (x-5) {$x_5$};
  \node [pnode] (x-8) at (6.5,0.5) {};
  \node [slabel] at (x-8) {$x_8$};
  \node [lnode] (l-2) at (6.25,2.0) {};
  \node [nlabel] at (l-2) {$l_2$};
  \node [lnode] (l-3) at (6.75,2.0) {};
  \node [nlabel] at (l-3) {$l_3$};
  \node [] at (6.5, -1.1) {$k = 2$};

  \begin{pgfonlayer}{background}
    % draw obs edges
    \foreach \i in {0,3,...,6} { \draw (x-\i) -- (l-0); }
    \foreach \i in {1,4,...,7} { \draw (x-\i) -- (l-1); }
    \foreach \i in {2,5,...,8} { \draw (x-\i) -- (l-2); }
    \foreach \i in {2,5,...,8} { \draw (x-\i) -- (l-3); }
    \foreach \i in {1,4,...,7} { \draw (x-\i) -- (l-4); }

    % draw odom edges
    \draw (x-6) -- (x-7);
    \draw (x-7) -- (x-8);
    \path [draw, rounded corners=4mm] (x-0) -- ++(0.5, -0.6) -- ++(2.0, 0.0) -- (x-1);
    \path [draw, rounded corners=4mm] (x-1) -- ++(0.5, -0.6) -- ++(2.0, 0.0) -- (x-2);
    \path [draw, rounded corners=4mm] (x-3) -- ++(0.5, -0.65) -- ++(2.0, 0.0) -- (x-4);
    \path [draw, rounded corners=4mm] (x-4) -- ++(0.5, -0.65) -- ++(2.0, 0.0) -- (x-5);
    \path [draw, rounded corners=4mm] (x-2) -- (6.75, -0.75) -- (0.6, -0.75) -- (0.6, 0.0) -- (x-3);
    \path [draw, rounded corners=4mm] (x-5) -- (7.5, 0) -- (7.5, -0.85) -- (-0.5, -0.85) -- (-0.5, 0.5) -- (x-6);
  \end{pgfonlayer}
\end{tikzpicture}}
  \end{subfigure}
  \begin{subfigure}[b]{\columnwidth}
\centerline{    \tikzstyle{subgraph} = [draw,minimum width=1.45cm,minimum height=3.25cm]
\def\s{1.8}  % spacing

\begin{tikzpicture}[scale=.9]
  % add odometry
  \foreach \k in {0,1,...,1} {
    \draw [double,thick] (\k*\s,-1) -- (\k*\s+\s,-1) {};
  }
  \draw [double,thick] (2*\s,-1) -- ++(0.95,0) {};
  \draw [double,thick] (4*\s,-1) -- ++(-1.2,0) {};
  \path [draw,double,thick,rounded corners=4mm] (0,-1) -- ++(-0.5,0) -- ++(0,-1.0) -- ++(4*\s+1.25,0) -- ++(0,1) -- (4*\s,-1) {};
  \foreach \k in {0,1,...,2} {
    %\node [subgraph] at (\k*\s-0.1,0) {};
    \draw [double,thick] (\k*\s,-1.0) -- (\k*\s,1.1) {};
    \node at (\k*\s-0.52,1.3) {$\mc{L}_{\k}$};
    \node at (\k*\s,-1.47) {$\Pi_{\k}$};
    % spawn a couple pose nodes
    \foreach \i in {0,1,...,4} {
      \node [pnode] at (\k*\s+0.07- \i*0.05,-1.0) {};
    }
    % spawn a couple landmark nodes
    \foreach \j in {0,1,...,3} {
      \node [lnode] at (\k*\s+0.05 - \j*0.05,1.1) {};
    }
  }
  % add the ellipsis
  \node at (3*\s-0.2,0) {$\mathbf{\ldots}$};
  % k=r-1
  %\node [subgraph,minimum width=1.65cm] at (4*\s-0.2,0) {};
  \draw [double,thick] (4*\s+0.1,-1.0) -- (4*\s+0.1,1.1) {};
  \node at (4*\s-0.59,1.3) {$\mc{L}_{r-1}$};
  \node at (4*\s+0.1,-1.47) {$\Pi_{r-1}$};
  % spawn a couple pose nodes
  \foreach \i in {0,1,...,4} {
    \node [pnode] at (4*\s+0.17- \i*0.05,-1.0) {};
  }
  % spawn a couple landmark nodes
  \foreach \j in {0,1,...,3} {
    \node [lnode] at (4*\s+0.15 - \j*0.05,1.1) {};
  }
\end{tikzpicture}}
  \end{subfigure}
  \caption{
    [top] Decimated SLAM example with $r=3$.
      Each landmark $l_j$ is associated (arbitrarily) with a particular offset $k_j \in \{0,1,\ldots,r-1\}$, which together with the rate $r$ defines which observations are accepted.
      Here, $l_0$ has offset 0, $l_1$ and $l_4$ have offset 1, and $l_2$ and $l_3$ have offset 2.
    [middle] The same decimated graph, redrawn to highlight how landmarks are grouped by decimation offset, partitioning the overall graph.
    [bottom] A generalized representation of decimated structure.
    Note that the $r$~subgraphs have limited inter-connectivity arising only from odometry edges.
  }
  \label{fig:ndec}
  \vspace{-0.25cm}
\end{figure}

Applying the elimination ordering $\bar{\mc{P}}$ to this graph $G_{\ttt{dec}}$ yields
\begin{equation}\label{eq:complexity_Gdec}
  \mc{C}(G_{\ttt{dec}}, \bar{\mc{P}}) \sim (n_l d_l + 9 n_x d_x) \Big(d_x \frac{n_x}{r} \Big)^2
\end{equation}
To obtain \eqref{eq:complexity_Gdec}, we follow ordering $\bar{\mc{P}}$ and eliminate the $n_l$ landmarks first.
This involves a total of $n_l d_l (d_l + n_x d_x / r)^2 \sim n_l d_l (n_x d_x)^2 / r^2$ operations.
Assuming the worst-case, this leaves each $\Pi_k$ as a clique, with additional (assumed complete) connections to its neighboring cliques $\Pi_{k-1}$ and $\Pi_{k+1}$.
Eliminating each of these cliques one-by-one involves computation upper-bounded by
\begin{align}
  \sum_{k=0}^{r-1} \sum_{i=1}^{\lvert \Pi_k \rvert} d_x \Big( d_x + 3 d_x \frac{n_x}{r} \Big)^2  &\sim r \frac{n_x}{r} d_x \Big( 3 d_x \frac{n_x}{r} \Big)^2 \\
  &= \frac{9}{r^2} (d_x n_x)^3
\end{align}
and adding this computation to the previous step gives \eqref{eq:complexity_Gdec}.

Thus, we see that decimation produces a partitioned super-structure which reduces graph complexity asymptotically by $r^2 / 9$ compared to \eqref{eq:complexity_G}.
Comparing to keyframing \eqref{eq:complexity_Gk}, decimation is clearly less effective at reducing computation.
Nonetheless, decimation can have other advantages, particularly if estimating the full trajectory (including intermediate poses) is of direct interest.
A full comparison of keyframing and decimation is outside the scope of this paper.

The experimental results in the next section emphasize that it is the unique partitioned structure shown in Figure~\ref{fig:ndec}, rather than simply the reduction in number of measurements, which produces this computational savings.

\subsection{Experimental Results}

A suite of simulation experiments were performed to verify the analytic results discussed above in a full 3D, incremental visual SLAM setting.
In the simulation, a robot drives a sinusoidal trajectory, observing nearby landmarks according via a monocular visual sensor.
At each step of the simulation, a new pose node is added to the graph, and newly-triangulated landmarks are added to the graph.
Because of the under-rank nature of monocular measurements, landmarks are not initialized (i.e.\ added to the graph) until they have been observed a minimum number of times.
Poses are represented as elements of $\SE(3)$, and landmarks as points in $\R^3$.

\begin{figure}[t]
  %% Figure showing sim trajectory
  \centering
  \begin{subfigure}[b]{0.2\textwidth}
    \includegraphics[width=\textwidth]{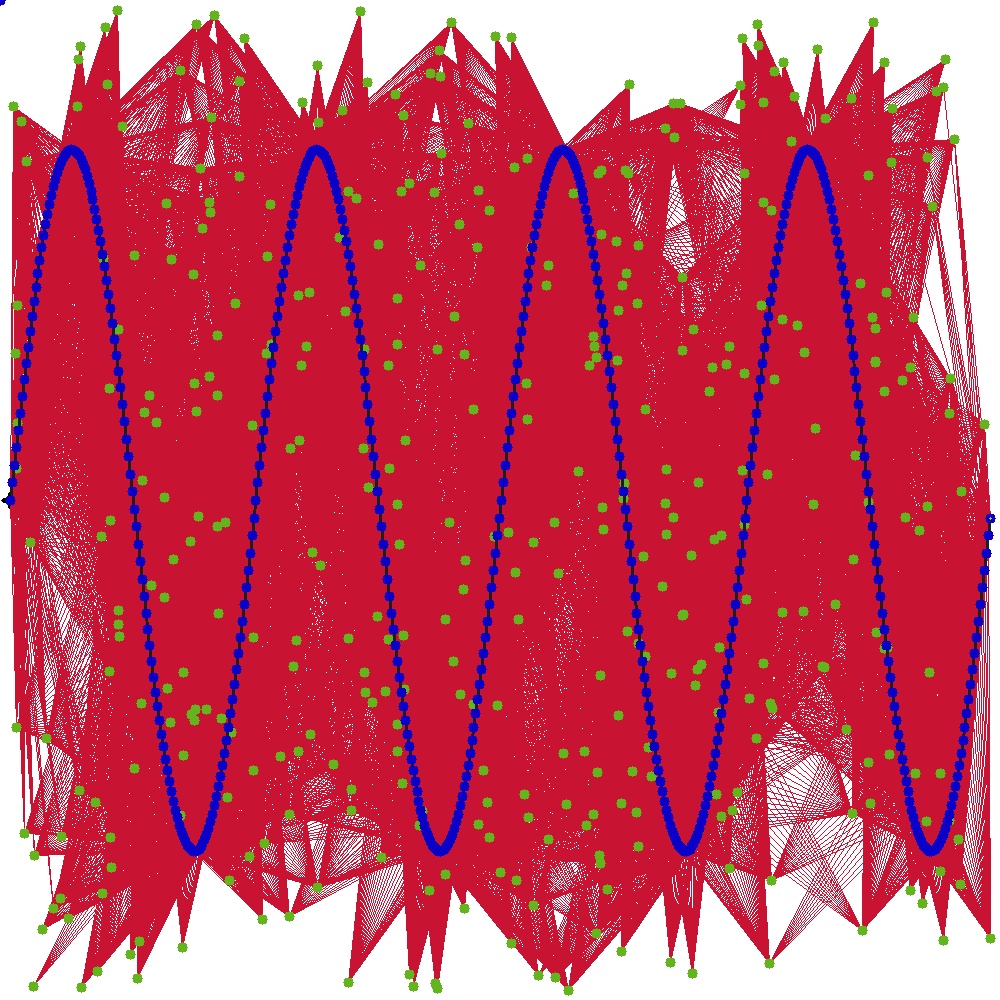}
  \end{subfigure}
  \hfill
  \begin{subfigure}[b]{0.2\textwidth}
    \includegraphics[width=\textwidth]{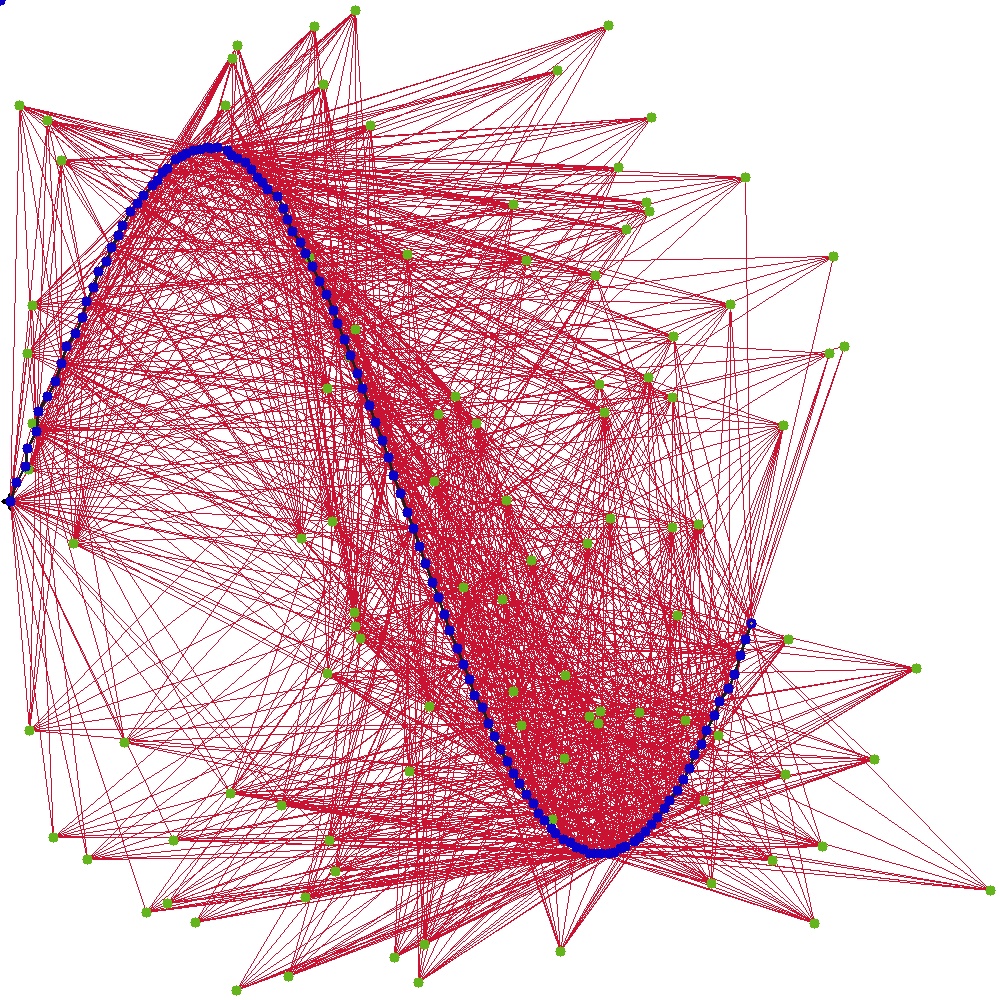}
  \end{subfigure}
  \caption{
    Simulated visual-odometry dataset.
    [left] Top-down view of the full trajectory (blue pose nodes) and landmarks (green).
      The robot moves in a sinusoidal pattern along the horizontal plane, starting from the left and moving right.
      Red edges indicate the full set of monocular vision observations, and blue edges represent pose-pose odometry.
    [right] A partially-completed trajectory, with landmark observations decimated at a rate of $r=4$.
  }
  \label{fig:sim_overview}
  \vspace{-0.25cm}
\end{figure}

% Plot r=4
\begin{figure*}[t]
	\centering
	\begin{subfigure}[b]{.95\columnwidth}
		\includegraphics[width=\textwidth,trim={3.9cm, 8.5cm, 4.5cm, 8cm},clip]{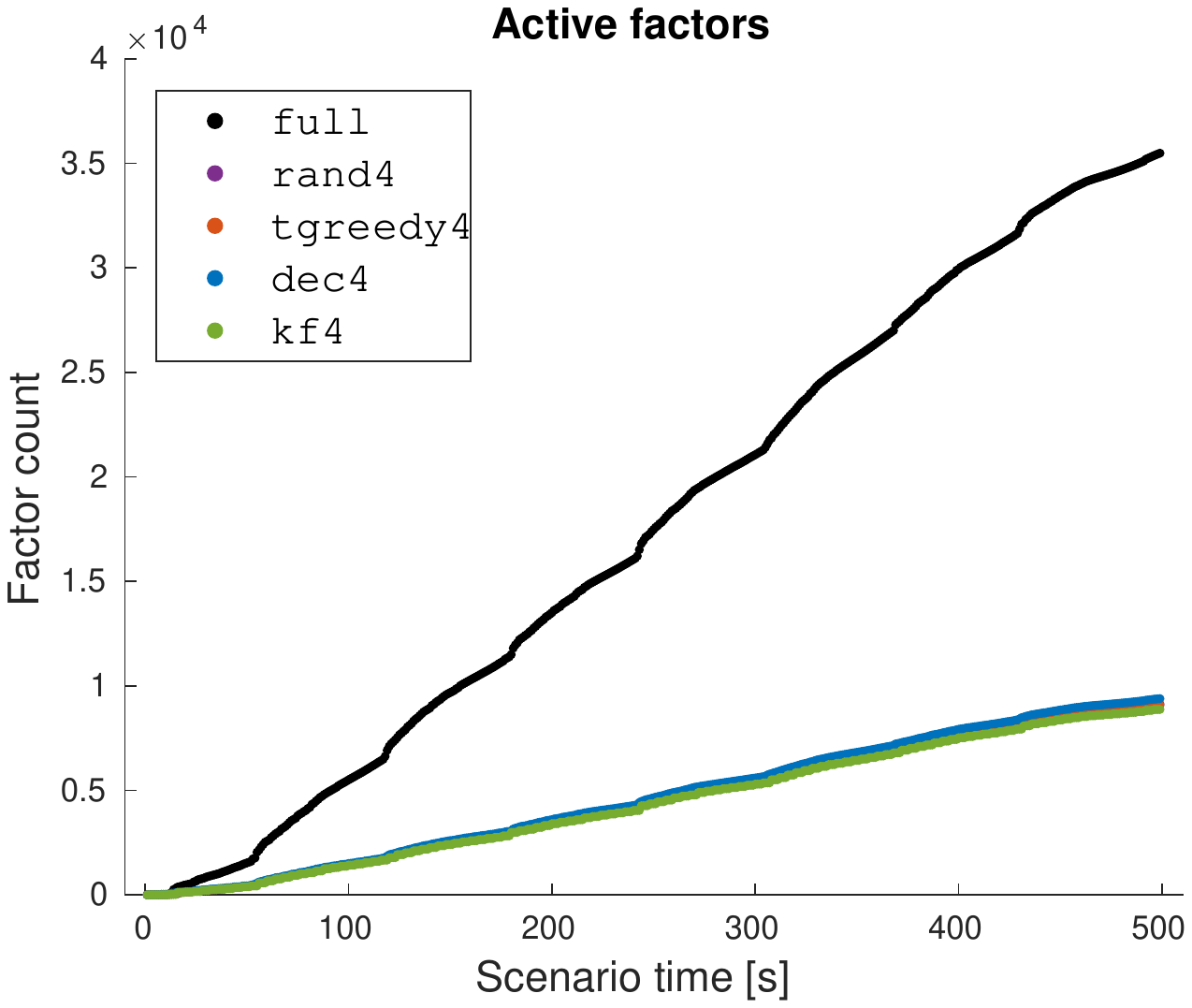}
	\end{subfigure}
\hspace*{.95\columnsep}
	\begin{subfigure}[b]{.95\columnwidth}
		\includegraphics[width=\textwidth,trim={3.9cm, 8.5cm, 4.5cm, 8cm},clip]{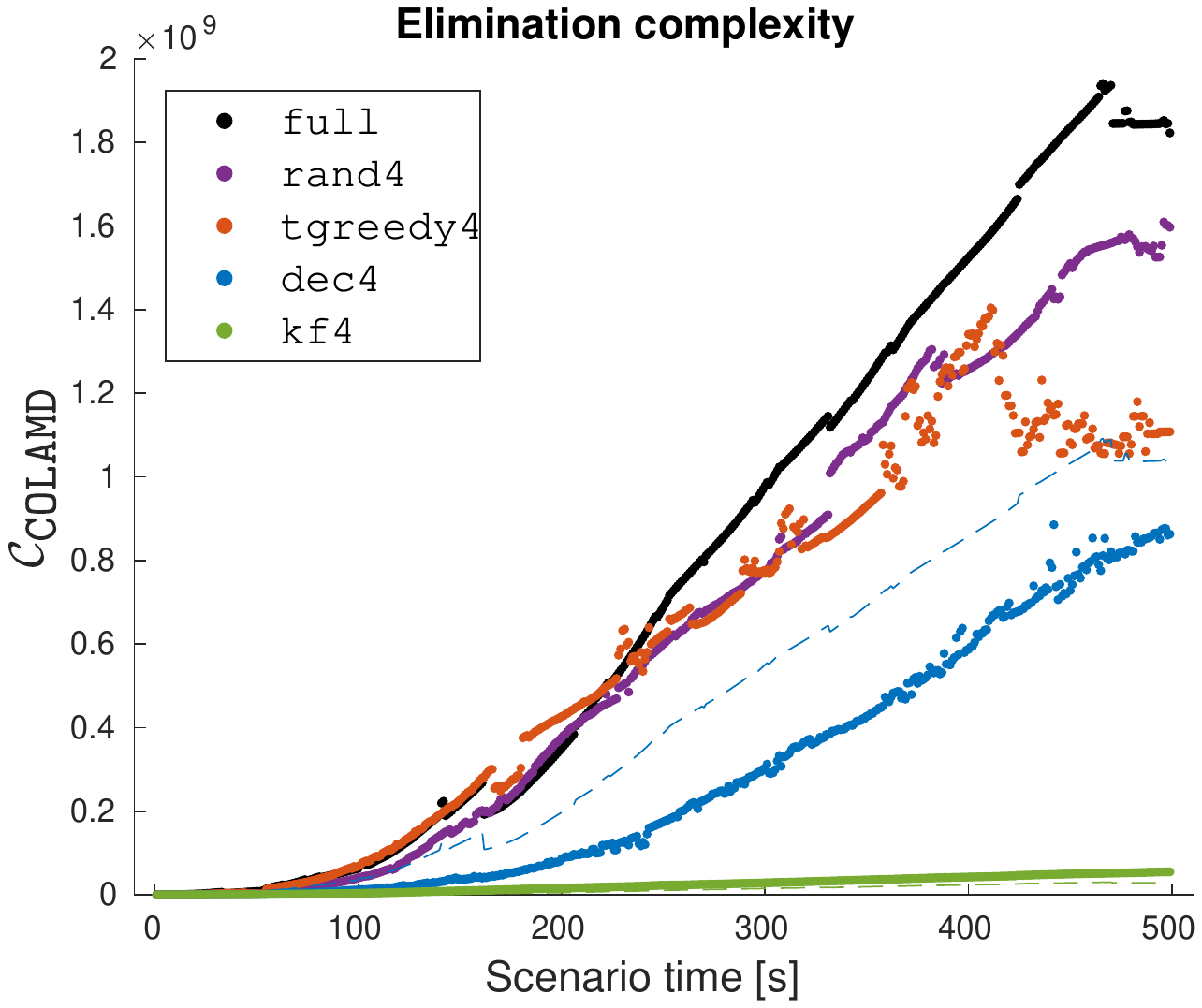}
	\end{subfigure}
	\caption{
		Elimination complexity during a simulated SLAM experiment with $r=4$.
		[left] All three pruning methods maintain a similar number of factors in the graph.
		[right] In all cases, elimination complexity grows over time.
		Though the three pruned estimators maintain a similar number of measurement factors, they have significantly different elimination complexities.
		Because \ttt{rand} prunes without any regard for global structure, it produces underwhelming EC reduction compared to \ttt{dec} or \ttt{kf}.
		The dashed lines demonstrate the predicted complexities for \ttt{dec} and \ttt{kf}, based on scalings of \ttt{full} derived from \eqref{eq:complexity_Gdec} and \eqref{eq:complexity_Gk}.
		The actual complexities match these predictions well, validating the method of analysis presented in this paper.
	}
	\label{fig:results_r4}
	\vspace{-0.25cm}
\end{figure*}

% Plot r=6
\begin{figure*}[t]
	\centering
	\begin{subfigure}[b]{.95\columnwidth}
		\includegraphics[width=\textwidth,trim={3.9cm, 8.5cm, 4.5cm, 8cm},clip]{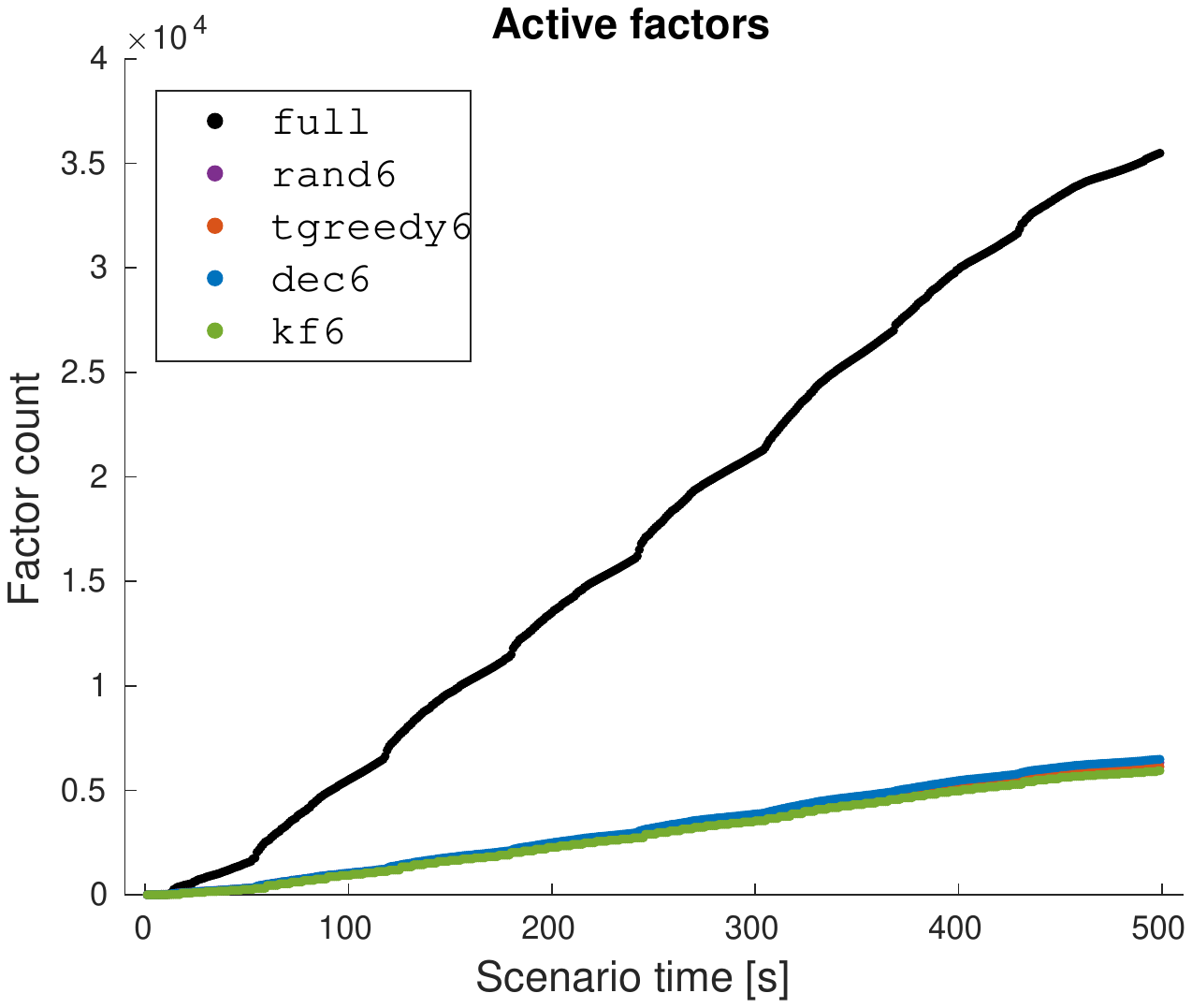}
	\end{subfigure}
\hspace*{.95\columnsep}
	\begin{subfigure}[b]{.95\columnwidth}
		\includegraphics[width=\textwidth,trim={3.9cm, 8.5cm, 4.5cm, 8cm},clip]{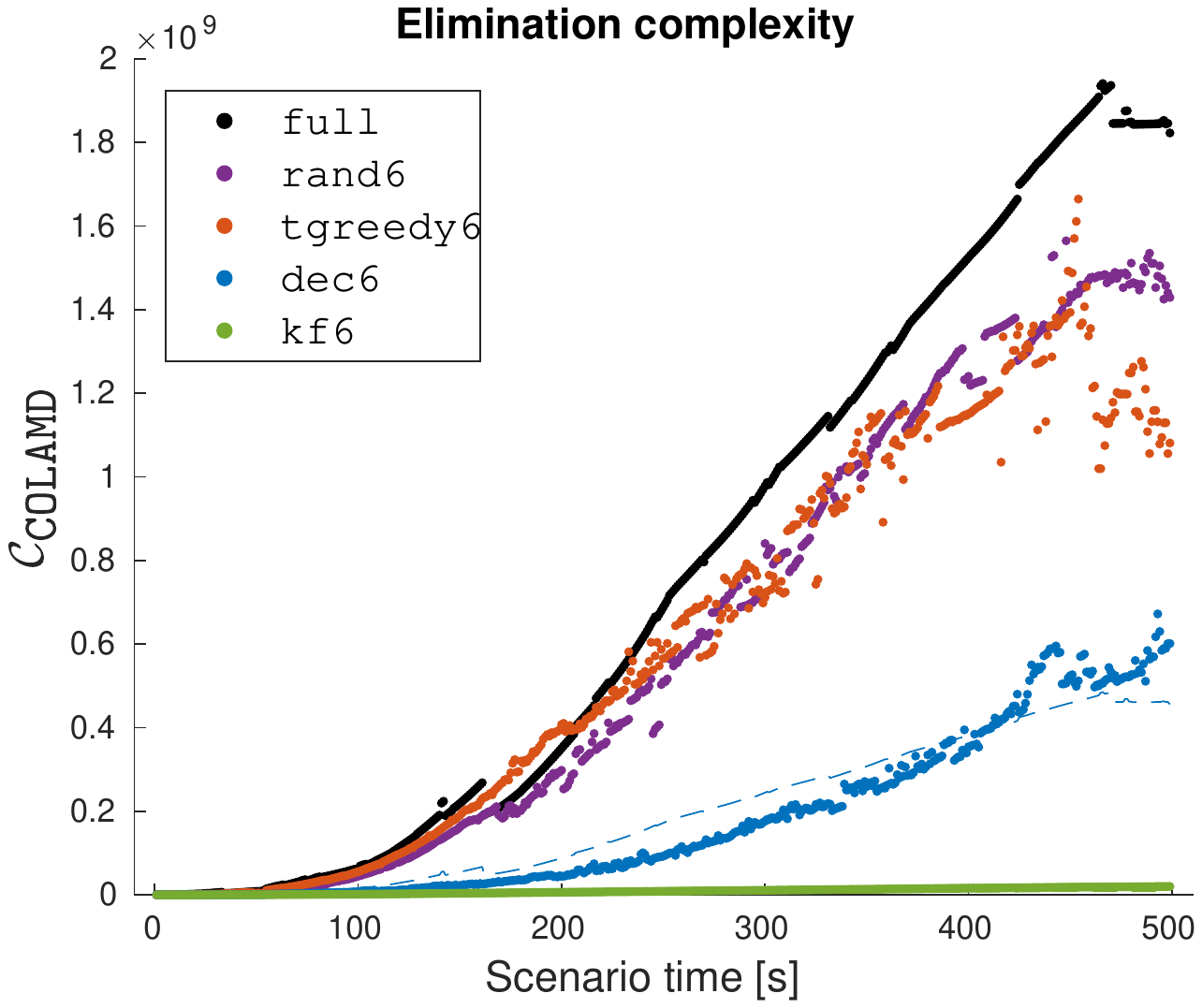}
	\end{subfigure}
	\caption{
		Elimination complexity during a simulated SLAM experiment with $r=6$.
		The trends here are very similar to those demonstrated in Figure \ref{fig:results_r4}.
	}
	\label{fig:results_r6}
	\vspace{-0.25cm}
\end{figure*}

The full trajectory, landmark distribution, and set of available landmark observations are shown in Figure \ref{fig:sim_overview}.
As this simulation accounts for realistic sensor limitations, any given landmark is only observed by a subset of robot poses.
This means that the complexity results derived in Section \ref{s:primitives}, which assume all landmarks are observable from all poses, may provide significant \emph{overestimates} of the realized complexity $\mc{C}_{\ttt{COLAMD}}$.

Several pruning strategies are evaluated here, all parameterized by pruning rate $r$.
\begin{itemize}
  \item \ttt{rand}: Random pruning, parameterized to remove a similar number of observations to the other  methods.
  \item \ttt{tgreedy}: Greedy algorithm of \cite{khosoussi2016maximizing} which attempts to maximize the number of spanning trees in the graph.
  \item \ttt{kf}: Simple keyframing strategy which represents only every $r$-th timestep in the optimization.
  \item \ttt{dec}: Decimation strategy described in Section \ref{s:decimation}, which always accepts the first available observation of each landmark.
\end{itemize}
As a comparison, results with no pruning are shown as \ttt{full}.
In all cases, only visual observations (pose-landmark edges) are considered for pruning.

The goal of these experiments is to the verify the complexity reduction estimates derived in the previous section.
As expected, the predicted complexity reduction of both keyframing and decimation methods are well-supported by the experimental results shown in Figures \ref{fig:results_r4} and \ref{fig:results_r6}.
In each case, the plotted dashed lines represent a complexity ``prediction'' produced by a simple scaling of the elimination complexity of $\ttt{full}$.
For keyframing this prediction is scaled by $1 / r^3$, and for decimation by $9 / r^2$, based on \eqref{eq:complexity_Gk} and \eqref{eq:complexity_Gdec} respectively.
As can be seen, the realized complexity follows this prediction well, verifying the method of analysis presented in Section \ref{s:primitives}.

The average iSAM2 update computation times are in Table \ref{table:isam_perf}, which combined with the EC plots in Figures \ref{fig:results_r4} and \ref{fig:results_r6}, shows that the  savings achieved by \ttt{rand} and \ttt{tgreedy} are relatively small and do not scale with increased pruning.
In contrast, \ttt{dec} and \ttt{kf} provide significant reduction at $r = 4$ and continue to improve at $r=6$.

\begin{table}[t]
  \centering
  \caption{
    iSAM2 performance in simulation shown in Figure \ref{fig:sim_overview}.
    Note that \texttt{rand} and \texttt{tgreedy} fail to improve mean iSAM2 update times significantly between $r=4$ and $r=6$, while \texttt{dec} and \texttt{kf} continue to reduce computation with increased pruning.
  }
  \label{table:isam_perf}
  \begin{tabular}{c c}
    \toprule
    \textbf{Method} & \textbf{Avg. iSAM2 update time [s]} \\
    \midrule
    \ttt{full} & 0.205 \\
    \\
    \ttt{rand4} & 0.154 \\
    \ttt{tgreedy4} & 0.138 \\
    \ttt{dec4} & 0.087 \\
    \ttt{kf4} & \textbf{0.023} \\
    \\
    \ttt{rand6} &  0.131 \\
    \ttt{tgreedy6} & 0.137 \\
    \ttt{dec6} & 0.062 \\
    \ttt{kf6} & \textbf{0.012} \\
    \midrule
  \end{tabular}
  \vspace{-0.5cm}
\end{table}

%%%%%%%%%%%%%%
\section{Conclusions}

Many existing measurement selection techniques focus narrowly on edge count reduction \cite{khosoussi2016maximizing,huang2013consistent,dissanayake2000computationally} or locally-sparse structure \cite{carlevaris2013long,mazuran2016nonlinear}.
By neglecting global structure, these methods can produce limited computation reduction even after aggressive sparsification.
This paper proposes the use of \emph{elimination complexity} (EC) as a link between graph structure and computation, and demonstrates how simple heuristics like decimation and keyframing produce dramatic computation savings.

As an analytic tool, the EC framework is used to predict asymptotic computation reduction scaling with $r^2 / 9$ and $r^3$ for decimation and keyframing, respectively.
These predictions are confirmed numerically, and shown to far outperform structurally na\"{\i}ve methods which remove the same number of edges.
In addition to the fact that many sophisticated selection approaches can be computationally impractical for high-rate, realtime use, this demonstrates that they also may not be as effective at reducing computation as these much simpler policies.

Ultimately, this motivates the search for new measurement selection strategies which directly and efficiently reduce EC, rather than focusing naively on edge count or local sparsification.
As was seen here, EC itself can be used to evaluate existing strategies, and provides the link between graph structure and computation that to the best of the authors' knowledge has been lacking from the SLAM literature.

Because it can be evaluated from and updated alongside the iSAM2 Bayes Tree, EC can also be applicable for adaptation of computation-management policies online.
For example, if a robot using a sliding-window or fixed-lag estimator \cite{chiu2013robust,steiner2017vision,sibley2010sliding} enters a new environment with relatively fewer landmarks, the active SLAM graph will decrease in complexity.
By monitoring EC, the estimator could choose to increase the window length to fill its computation budget and ensure sufficient landmarks are tracked at all times.

\section*{Acknowledgment}

The authors thank Dr.\ Kasra Khosoussi for the productive conversations and input over the course of this work.
Furthermore, we would like to thank DARPA for supporting this research and additionally Julius Rose from Draper and Professor Nicholas Roy from MIT for their leadership and general support.

%%%%%%%%%%%%%%%%%%%%%%%%%%%%%%%%%%%%%%%%%%%%%%%%%%%%%%%%%%%%%%%%%%%%%%%%%%%%%%%%
\balance

\bibliographystyle{IEEEtran}
\bibliography{IEEEabrv,ref}

%%%%%%%%%%%%%%%%%%%%%%%%%%%%%%%%%%%%%%%%%%%%%%%%%%%%%%%%%%%%%%%%%%%% Page balancer
%\addtolength{\textheight}{-12cm}   % This command serves to balance the column lengths
                                  % on the last page of the document manually. It shortens
                                  % the textheight of the last page by a suitable amount.
                                  % This command does not take effect until the next page
                                  % so it should come on the page before the last. Make
                                  % sure that you do not shorten the textheight too much.
%%%%%%%%%%%%%%%%%%%%%%%%%%%%%%%%%%%%%%%%%%%%%%%%%%%%%%%%%%%%%%%%%%%%%%%%%%%%%%%%

\end{document}